\relax
\documentclass[letterpaper]{article} 
\usepackage{aaai21}  
\usepackage{times}  
\usepackage{helvet} 
\usepackage{courier}  
\usepackage[hyphens]{url}  
\usepackage{graphicx} 
\urlstyle{rm} 
\usepackage{natbib}  
\usepackage{caption} 
\frenchspacing  
\setlength{\pdfpagewidth}{8.5in}  
\setlength{\pdfpageheight}{11in}  
\pdfinfo{
  /Title (Overestimation learning with guarantees)
  /Author (Adrien Gauffriau)
  }
\setcounter{secnumdepth}{1} 

%
\setlength\titlebox{2.5in} 
\title{Overestimation learning with guarantees}

\author{

 Adrien Gauffriau\textsuperscript{\rm 1}\thanks{seconded from Airbus Operations, Toulouse}
 Fran\c{c}ois Malgouyres\textsuperscript{\rm 2}\thanks{This work has benefited from the AI Interdisciplinary Institute ANITI. ANITI is funded by the French "Investing for the Future – PIA3" program under the Grant agreement n°ANR-19-PI3A-0004.},
 M\'elanie Ducoffe\textsuperscript{\rm 1}\thanks{seconded from Airbus AI Research, Toulouse}
 \\
 }

\affiliations{
    \textsuperscript{\rm 1} IRT Saint Exup\'ery, Toulouse, France \\
    \textsuperscript{\rm 2}Institut de Mathématiques de Toulouse ; UMR5219 Université de Toulouse ; CNRS ;\\ UPS IMT F-31062 Toulouse Cedex 9, France \\
    \{adrien.gauffriau, melanie.ducoffe\}@irt-saintexupery.com, Francois.Malgouyres@math.univ-toulouse.fr
}


\usepackage{amsmath}
\usepackage{amsthm}

\usepackage{algorithm}
\usepackage{algorithmic}

\usepackage{amssymb}
\usepackage{subcaption}
\usepackage{stmaryrd}

\usepackage{pict2e}
\setlength{\unitlength}{4mm}
\linethickness{0.2mm}

\newcommand {\RR} {\mathbb R}

\newcommand {\NN} {\mathbb N}

\newcommand {\calD} {\mathcal{D}}


\newcommand {\wbf} { \mathbf{w}}      



\DeclareMathOperator {\argmin} {argmin}



\newtheorem{prop}{Proposition}
\newtheorem{thm}{Theorem}

\usepackage[squaren, Gray, cdot]{SIunits}

\newcommand {\bbf} { \mathbf{b}}   
 
\newcommand {\calR} {\mathcal R} 
\newcommand {\calC} {\mathcal C} 


\usepackage{color}

\urlstyle{rm} 
\usepackage{graphicx}  
\frenchspacing  
\setlength{\pdfpagewidth}{8.5in}  
\setlength{\pdfpageheight}{11in}  

\newtheorem{definition}{Definition}
\newcommand{\dominate}{Majoring Points}
\newcommand{\Gdominate}{Grid Based Majoring Points}
\newcommand{\Fdominate}{Function Adapted Majoring Points}
\newcommand{\Ddominate}{Data Adapted Majoring Points}
\newcommand{\DA}{Dichotomy Algorithm}
\newcommand{\toyone}{$1D$ synthetic experiment}

\newcommand{\baseline} {{\em baseline}}


\begin{document}
\maketitle
\begin{abstract}

We describe a complete method that learns a neural network which is guaranteed to over-estimate a reference function on a given domain. The neural network can then be used as a surrogate for the reference function.

The method involves two steps. In the first step, we construct an adaptive set of \dominate. In the second step, we optimize a well-chosen neural network to over-estimate the \dominate.

In order to extend the guarantee on the \dominate~to the whole domain, we necessarily have to make an assumption on the reference function. In this study, we assume that the reference function is monotonic.

We provide experiments on synthetic and real problems. The experiments show that the density of the \dominate~concentrate where the reference function varies. The learned over-estimations are both guaranteed to overestimate the reference function and are proven empirically to provide good approximations of it.

Experiments on real data show that the method makes it possible to use the surrogate function in embedded systems for which an underestimation is critical; when computing the reference function requires too many resources.

\end{abstract}

\section{Introduction}

\subsection{Overestimation guarantee}
In this paper, we consider a real value finite function $f$ defined on a compact domain $\calD$ and describe a method that finds optimal weights $\wbf^*$ and bias $\bbf^*$ such that the neural network $f_{\wbf^*,\bbf^*}$ both provides a good approximation of $f$:
\[f_{\wbf^*,\bbf^*} \sim f,
\] 
and is guaranteed to overestimate $f$ over $\calD$:
\begin{equation}\label{over-estim}
f_{\wbf^*,\bbf^*}(x) \geq f(x)\qquad\mbox{ for all }x\in\calD.
\end{equation}
Inspired by critical systems applications, we require \eqref{over-estim} to be guaranteed by a formal theorem. We call $f_{\wbf^*,\bbf^*}$ the {\em surrogate} and call $f$ the {\em model}.

In the typical application we have in mind, $f$ is the result of a deterministic but complex phenomenon. It is difficult to compute and its evaluation requires intensive computations or extensive memory consumption. We consider two settings. In the first setting, we can evaluate $f(x)$ for any $x\in\calD$. In the second setting, we only know a data set $(x_i,f(x_i))_{i=1..n}$ of size $n\in\NN$.

The constructed surrogate $f_{\wbf^*,\bbf^*}$ permits to rapidly evaluate an approximation of $f$ at any point of $\calD$. Of course, we want the surrogate $f_{\wbf^*,\bbf^*}$ to approximate $f$ well; but most importantly we want $f_{\wbf^*,\bbf^*}$ to overestimate $f$. In the typical scenario we have in mind, $f$ models the consumption of some resource for a critical action. Underestimating the model may cause a (potentially deadly) hazard that will not be acceptable for certification authorities especially in aeronautics (EASA or FAA). The applications include for instance: - the estimation of the braking distance of an autonomous vehicle; - the number of kilometers that can be traveled; - the maximum load a structure can carry, - the network traveling time etc Guaranteed overestimation can also be used to guarantee the fairness of a score, when underestimation on a sub-population leads to an unfair surrogate function. 
Providing such fairness or performance guarantee can be seen as a niche activity, but it alleviates the limitation that restrains the massive industrialization of neural networks in critical applications where passengers lives are at stake.

Finally, the adaptation of the method to construct an {\em under-estimation} of $f$ is straightforward and, for ease of presentation, not exposed in the paper. Of course, the combination of both an over-estimation and an under-estimation permits, for any $x$, the construction of an {\em interval} in which $f(x)$ is guaranteed to be.

\subsection{Related works}

The most standard type of guarantees in machine learning aim at upper-bounding the {\em risk} by upper-bounding the generalization error (see \cite{ab-nnltf-99} for an overview). We would like to highlight that the risk measures an average cost that does not exclude failures. Moreover, at the writing of this paper, the bounds on the generalization error are very pessimistic when compared to the performance observed in practice \cite{harvey2017nearly,bartlett2019nearly} and the recent overview \cite{jakubovitz2019generalization}. In particular, neural network are commonly learned with much less examples than required by the theory.

Other works provide guarantees on the neural network performance \cite{pmlr-v80-mirman18b,boopathy2019cnn,katz2017reluplex} that can be used to exclude failures (in our setting, the under-estimation of $f$).
The philosophy of these works is to analyze the output of the learning phase in order to provide guarantees of robustness.
Another line of research, is to impose constraints or optimize robustness criteria during the learning phase \cite{fischer2019,raghunathan2018certified}. This does not permit to guarantee an over-estimation property.

Finally, another direction makes probabilistic assumptions and provides confidence scores \cite{Gal2016Bayesian,pearce2018high,tagasovska2019,keren2018calibrated}. The confidence score does not necessarily provide a formal guarantee.

Compared to these approaches, the guarantee on the surrogate $f_{\wbf^*,\bbf^*}$ is due to the attention given to the construction of the learning problem and the hypotheses on the function $f$. To the best of our knowledge, and not considering trivial overestimation with poor performances, we are not aware of any other construction of a surrogate $f_{\wbf^*,\bbf^*}$ that is guaranteed to overestimate the model $f$.

\subsection{Hypotheses and restrictions}

In order to extend the overestimation property to the whole domain we can obviously not consider any arbitrary function $f$. In this paper, we restrict our analysis to real-valued and finite {\em non-decreasing functions}. The extension to vector valued function is straightforward. The extension to non-increasing functions is straightforward too. We deliberately  use these restrictive hypotheses to simplify the presentation. Furthermore, many extensions of the principles of the method described in the paper to other classes of non-monotone functions are possible.

Formally, for a compact domain $\calD\subset \RR^d$, a function $g:\calD \longrightarrow \RR$ is {\em non-decreasing} if and only if
\[g(x) \leq g(x') \qquad \mbox{for all } x\leq x'
\]
where the partial order on $\RR^d$ is defined for any $x=(x_k)_{k=1..d}$ and  $x'=(x'_k)_{k=1..d}\in\RR^d$ by
\[ x\leq x'  \mbox{ if and only if }  x_k \leq x'_k \mbox{, for all } k=1..d.
\]
 
 Other authors have already studied the  approximation of non-decreasing functions with neural networks \cite{lang2005monotonic,daniels2010monotone,sill_1998}. In our context, the monotonic hypothesis is motivated by the fact that monotone function are ubiquitous in industrial applications and in physics. Moreover, for a given application, where a function $f':\RR^{d'} \longrightarrow \RR$ needs to be approximated. It is sometimes possible to extract features with a function $g:\RR^{d'} \longrightarrow \RR^d$ such that the function $f:\RR^{d} \longrightarrow \RR$, satisfying $f' =f\circ g$, is monotone.

Another restriction is that the method cannot be applied when the dimension $d$ is large and $f$ varies in all directions. In fact, one contribution of the paper is to design algorithms to alleviate this problem and apply the method for larger $d$, but $d$ must anyway remain moderate for most function $f$.  

These hypotheses permit to have global guarantee that hold on the whole domain $\calD$ while weaker hypotheses on $f$ only lead to local guarantees, holding in the close vicinity of the learning examples.

\subsection{Organization of the paper}
In the next section, we define \dominate~and describe a grid based and an adaptive algorithm to construct them. The algorithms are adapted when the function $f$ can be evaluated at any new input as well as when we only know a dataset $(x_i,f(x_i))_{i=1..n}$. In Section \ref{nn-sec}, we describe a strategy to construct monotonic, over-estimating neural networks. Finally, in Section \ref{expe-sec}, we provide experiments showing that the \dominate~are located in the regions where $f$ varies strongly or is discontinuous. We also illustrate on synthetic examples that the method can accurately approximate $f$, while being guaranteed to overestimate it. An example on real data illustrates that the use of over-estimating neural networks permits to reduce the memory required to compute an over-estimation and thus permits to embed it, in a real world application.

\section{\dominate~}
\subsection{Introduction}

\begin{definition}{\bf \dominate~}

  Let $(a_i,b_i)_{i=1..m}\in \left(\RR^d\times \RR\right)^m$. Let  $\calD\subset \RR^d$ be compact and let $f:\RR^d\longrightarrow \RR$ be finite and non-decreasing. We say that  $(a_i,b_i)_{i=1..m}$ are {\em \dominate~for $f$} if and only if for any non-decreasing $g:\RR^d\longrightarrow \RR$ :
  \[\mbox{If } g(a_i)\geq b_i, \forall i=1..m, ~~ \mbox{ then } ~~ g(x)\geq f(x), \forall x\in\calD.
  \]
  \end{definition}

When $f$ is upper-bounded on $\calD$, the existence of such majoring points is established by considering: $m=1$, 
\begin{itemize}
\item $a_1$ such that $a_1 \leq x$ for all $x\in\calD$,
\item $b_1 = \sup_{x\in\calD} f(x)$.
\end{itemize}
However, for most function $f$, any non-decreasing $g$ such that $g(a_1)\geq b_1$ is a poor approximation of $f$. The goal, when constructing \dominate~of $f$ is to have $b_i\sim f(a_i)$, while $b_i\geq f(a_i)$, for all $i=1..m$. The number of points $m$ should remain sufficiently small to make the optimization of the neural network manageable.

 \subsection{Constructing \dominate~using a cover}
 For any $y$ and $y'\in\RR^d$, with $y\leq y'$, we define the {\em hyper-rectangle} between $y$ and $y'$ by
\[\calR_{y,y'} = \{x\in\RR^d | y\leq x < y' \}.
\]
Using hyper-rectangles, we define the considered covers.
\begin{definition}{\bf Cover}

Let $(y_i)_{i=1..m}$ and $(y'_i)_{i=1..m}$ in $\RR^d$ be such that $y'_i\geq y_i$, for all $i=1..m$. We  say that $(y_i,y'_i)_{i=1..m}$  {\em covers} $\calD$ if and only if
\begin{equation}\label{Dinclu}
\calD \subset \cup_{i=1}^m \calR_{y_i,y_i'}.
\end{equation}
\end{definition}
For any cover $\calC = (y_i,y'_i)_{i=1..m}$, we define the function
\begin{equation}\label{deffC}
f_\calC (x) = \min_{i ~:~ x \in\calR_{y_i,y'_i}} f(y'_i)\qquad\mbox{, for all }x\in\calD.
\end{equation}
Notice that, since $\calC$ is a cover, $\{i| x \in\calR_{y_i,y'_i}\}\neq\emptyset$ and $f_\calC(x)$ is well-defined.  We can establish that the function $f_\calC$ overestimates $f$ over $\calD$:
\begin{equation}\label{iqrvhuit}
\mbox{For all } x\in\calD, \qquad  f_\calC (x) \geq  f(x).
\end{equation}
Indeed, for any $x\in\calD$, there exists $i$ such that $x\in\calR_{y_i,y'_i}$ and $f_\calC(x) = f(y'_i)$. Therefore, since $f$ is non-decreasing and $x\leq y'_i $, we have
\[f(x) \leq f(y'_i) = f_\calC(x).
\]
\begin{prop}{\bf A cover defines \dominate~}\label{prop2}

 Let  $\calD\subset \RR^d$ be compact. Let $\calC = (y_i, y'_i)_{i=1..m}$ be a cover of $\calD$ and let $f:\RR^d\longrightarrow \RR$ be finite and non-decreasing. The family $(y_i,f(y'_i))_{i=1..m}$ are \dominate~for $f$.
\end{prop}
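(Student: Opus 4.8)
The plan is to unfold the definition of \dominate~and verify the required implication directly by a short chaining argument. So I would start by fixing an arbitrary non-decreasing $g:\RR^d\longrightarrow\RR$ that satisfies the hypothesis $g(y_i)\geq f(y_i')$ for all $i=1..m$, together with an arbitrary point $x\in\calD$; the goal is then reduced to proving the single inequality $g(x)\geq f(x)$.

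First I would exploit that $\calC$ covers $\calD$: by \eqref{Dinclu} there exists an index $i$ with $x\in\calR_{y_i,y_i'}$, that is $y_i\leq x< y_i'$. This two-sided membership is the crucial ingredient, since it simultaneously yields a lower bound $y_i\leq x$ --- which I feed into the monotonicity of $g$ --- and an upper bound $x\leq y_i'$ --- which I feed into the monotonicity of $f$. Concretely, $g$ non-decreasing and $y_i\leq x$ give $g(x)\geq g(y_i)$; the hypothesis gives $g(y_i)\geq f(y_i')$; and $f$ non-decreasing together with $x\leq y_i'$ gives $f(y_i')\geq f(x)$. Chaining these three inequalities yields $g(x)\geq g(y_i)\geq f(y_i')\geq f(x)$, hence $g(x)\geq f(x)$. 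Since $x\in\calD$ was arbitrary, this proves the claim. Equivalently, the last two inequalities can simply be read off from the already-established bound $f_\calC(x)\geq f(x)$ of \eqref{iqrvhuit}, after choosing $i$ to realize the minimum defining $f_\calC(x)$.

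There is essentially no hard step here: the entire content is the observation that a single index $i$ witnessing $x\in\calR_{y_i,y_i'}$ links $g$ and $f$ at the two corners $y_i$ and $y_i'$ of one hyper-rectangle. The only point requiring care is bookkeeping --- making sure the same index $i$ is used throughout the chain, so that the hypothesis $g(y_i)\geq f(y_i')$ can act as the bridge between the lower bound on $g$ coming from monotonicity and the upper bound on $f$ coming from monotonicity. I would also note in passing that the strict inequality $x<y_i'$ in the definition of $\calR_{y_i,y_i'}$ is never used; only $x\leq y_i'$ enters the argument, so the conclusion is insensitive to whether the hyper-rectangles are taken half-open or closed.
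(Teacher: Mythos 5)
Your proof is correct and is essentially the paper's own argument: fix one index $i$ witnessing $x\in\calR_{y_i,y_i'}$ and chain $g(x)\geq g(y_i)\geq f(y_i')\geq f(x)$, exactly as in the paper's inequality \eqref{calcultmp}, the only cosmetic difference being that the paper routes the final link through $f_\calC$ and \eqref{iqrvhuit} while you apply monotonicity of $f$ directly (which is precisely how \eqref{iqrvhuit} is proved). Your closing remark that only $x\leq y_i'$, not the strict inequality, is needed is also accurate.
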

\begin{proof}
We consider a non-decreasing function $g$ such that, 
\begin{equation}\label{srgoit}
\mbox{for all } i=1..m, \qquad g(y_i) \geq f(y'_i).
\end{equation}
We need to prove that,
\[
\mbox{for all } x\in\calD, \qquad  g(x) \geq   f(x).
\]
To do so, we consider $x\in\calD$ and $i$ such  that $x\in\calR_{y_i,y'_i}$. Using respectively that $g$ is non-decreasing, \eqref{srgoit}, the definition of $f_\calC$ \eqref{deffC} and \eqref{iqrvhuit}, we obtain the following sequence of inequalities:
\begin{equation}\label{calcultmp}
g(x) \geq g(y_i)\geq f(y'_i) \geq f_\calC(x)\geq  f(x). 
\end{equation}
\end{proof}

The function $f_\calC$ can be computed rapidly using a look-up table but requires storing $(y_i,f(y'_i))_{i=1..m}$. This can be prohibitive in some applications.

To deal with this scenario, we describe in Section  \ref{nn-sec} a method to construct a neural network such that $f_{\wbf^*,\bbf^*}$ is non-decreasing and satisfies $f_{\wbf^*,\bbf^*}(y_i) \geq f(y'_i)$,  for all $i=1..m$. According to the proposition, such a network provides a guaranteed over-estimation of $f$, whose computation is rapid. The resources required to store $\wbf^*$ and $\bbf*$ are independent of $m$. We show in the experiments that it can be several orders of magnitude smaller. This makes it possible to embed the overestimating neural network when embedding the \dominate~is not be possible. This advantage comes at the price of loss of accuracy as $f_{\wbf^*,\bbf^*}(x) \geq f_\calC(x)$ ($f_{\wbf^*,\bbf^*}(x)$ has the role of $g$ in \eqref{calcultmp}).

\subsection{\dominate~construction algorithm}
\subsubsection{Introduction}

 In this section, we describe algorithmic strategies to adapt \dominate~to the function $f$. Throughout the section, we assume that we know $y_{min}$ and $y_{max}\in\RR^d$ such that 
 \begin{equation}\label{defyminmax} 
\calD\subset \calR_{y_{min},y_{max}}. 
\end{equation}
The goal is to build a cover such that $f_\calC$ is close to $f$ and $m$ is not too large. Ideally, the cover can be expressed as the solution of an optimization taking into account these two properties. However, the optimization would be intractable and we only describe an heuristic algorithm that construct a cover.

We construct \dominate~according to two scenarios:
\begin{itemize}
\item We can generate $f(x)$ for all $x\in\RR^d$. We call the \dominate~generated according to this setting \Fdominate.
\item We have a dataset $(x_i,f(x_i))_{i=1..n}$. It is not possible to have access to more training points. We call the \dominate~generated according to this setting \Ddominate. In order to define them, we consider the function $\tilde f:\RR^d\longrightarrow \RR$ defined, for all $x\in\calR_{y_{min},y_{max}}$, by
\begin{equation}\label{ftilde} 
 \tilde f(x) = \min_{i: x_i \geq x} f(x_i).
\end{equation}
Since $f$ is non-decreasing, we have
\[ \tilde f(x) \geq f(x) \qquad\mbox{for all }x\in\calR_{y_{min},y_{max}}.
\]
\end{itemize}
At the core of the constructions described below is the construction of a cover $\calC = (y_i,y'_i)_{i=1..m}$.

\subsubsection{\Gdominate}

We consider an accuracy parameter $\varepsilon>0$ and a norm $\|.\|$ on $\RR^d$. We define 
$$n_{max} = \lceil\frac{\|y_{max} - y_{min}\|}{\varepsilon} \rceil.$$

A  simple way to define \dominate~is to generate a cover made of equally spaced points between $y_{min}$ and $y_{max}$. Setting 
\[r = \frac{ y_{max} - y_{min}}{n_{max}} \in\RR^d
\]
and for all $i_0,\cdots, i_{d-1}\in\{1,\ldots,N-1\}$, we set $i=\sum_{k=0}^{d-1} i_k N^k$ and
\[\left\{\begin{array}{l}
y_i =   y_{min} + (i_0r_1,\ldots,i_{d-1} r_d)   \\
y'_i =   y_i +r
\end{array}\right.
\]
Notice that, the parameter $r$ defining the grid satisfies $\|r\| \leq \varepsilon$.
Given the cover, the \Fdominate~$(a_i,b_i)_{i=0..N^d-1}$ are defined, for $i=0..n_{max}^d-1$,  by
\begin{equation}\label{constrAB}
\left\{\begin{array}{l}
a_i =  y_i   \\
b_i =  f(y'_i)  
\end{array}\right.
\end{equation}
We can also construct a \Gdominate~when the function $f$ cannot be evaluated but a dataset $(x_i,f(x_i))_{i=1..n}$ is available by replacing in the above definition $f$ by $\tilde f$, see \eqref{ftilde}.

The \Gdominate~are mostly given for pedagogical reasons. The number of  values $f(x)$ that need to be evaluated and the number of \dominate~defining the objective of the optimization of the neural network are both equal to $n_{max}^d = O(\varepsilon^{-d})$. It scales poorly with $d$ and this restrains the application of the \Gdominate~to small $d$, whatever the function $f$. 

When $f$ does not vary much in some areas, the \dominate~in this area are useless. This is what motivates the adaptive algorithms developed in the next sections.

\subsubsection{Adaptive \dominate}

Bellow, we describe a {\em \DA}~ that permits to generate an adaptive cover with regard to the variations of the function $f$ (or $\tilde f$). It begins with a cover made of the single hyper-rectangle $\calR_{y_{min}, y_{max}}$. Then it decomposes every hyper-rectangle of the current cover that have not reached the desired accuracy. The decomposition is repeated until all the hyper-rectangle of the current cover have the desired accuracy (see Algorithm \ref{AlgAdapGrid}). 

Initially, we have $\calD\subset \calR_{y_{min}, y_{max}}$. For any hyper-rectangle $\calR_{y,y'}$, denoting $r= \frac{y'-y}{2}$, the decomposition replaces $\calR_{y,y'}$ by  its sub-parts as defined by 
\begin{multline}\label{decomp-eq}
\big\{\calR_{y+\left(s_1r1,\ldots,s_dr_d),y+(s_1+1)r1,\ldots,(s_d+1)r_d\right) } | \\
(s_1,\ldots,s_d)\in\{0,1\}^d\big\}.
\end{multline}
(Hence the term `dichotomy algorithm'. ) The union of the sub-parts equal the initial hyper-rectangle. Therefore, the cover remains a cover after the decomposition. The final $\calC$ is a cover of $\calD$.

We consider a norm $\|.\|$ on $\RR^d$ and real parameters $\varepsilon>0$, $\varepsilon_f>0$ and $n_p\in\NN$. We stop decomposing an hyper-rectangle if a notion of accuracy is satisfied. The notion of accuracy depends on whether we can compute $f$ or not.
\begin{itemize}
\item When we can evaluate $f(x)$: The accuracy of $\calR_{y,y'}$ is defined by the test
\begin{equation}\label{test}
f(y') - f(y) \leq \varepsilon_f \qquad \mbox{or}\qquad \|y'-y\| \leq \varepsilon
 \end{equation}
\item When we only know a dataset $(x_i,f(x_i))_{i=1..n}$: The accuracy of $\calR_{y,y'}$ is defined by the test
\begin{equation}\label{test1}
\left\{\begin{array}{l}
\tilde f(y') - \tilde f(y) \leq \varepsilon_f \qquad \mbox{or}\qquad \|y'-y\| \leq \varepsilon\\
 \mbox{or}\qquad |\{i| x_i \in \calR_{y,y'}\}| \leq n_p
 \end{array}\right.
 \end{equation}
 where $|.|$ is the cardinal of the set.
\end{itemize}
We stop decomposing if a given accuracy is reached :
\begin{itemize}
\item when $f(y') - f(y) \leq \varepsilon_f$ or $\tilde f(y') - \tilde f(y) \leq \varepsilon_f$: This happens where the function $f$  varies only slightly.
\item when $\|y'-y\|$: This happens where the function $f$ varies strongly.
\item when $|\{i| x_i \in \calR_{y,y'}\}| \leq n_p$: This happens when the number of samples in $ \calR_{y,y'}$ does not permit to improve the approximation of $f$ by $\tilde f$ in its sub-parts.
\end{itemize}

\begin{algorithm}
  \begin{algorithmic}
    \REQUIRE $\varepsilon$ : Distance below which we stop decomposing
    \REQUIRE $\varepsilon_f$ : Target upper bound for the error in $f$ 
    \REQUIRE $n_p$ : number of  examples in a decomposable hyper-rectangle
    \REQUIRE Inputs needed to compute $f$ (resp. $\tilde f$)
    \REQUIRE $y_{min},  y_{max}$ : Points satisfying \eqref{defyminmax}
    \STATE
    \STATE $\calC \leftarrow \{\calR_{y_{min},  y_{max}}\}$
    \STATE $t \leftarrow 0$

    \WHILE{$t\neq1$}
    	\STATE $t \leftarrow 1$
      \STATE $\calC' \leftarrow \emptyset$
      \FOR{$\calR_{y,y'} \in \calC$}
        \IF{ $\calR_{y,y'}$ satisfies \eqref{test} (resp. \eqref{test1}) }
        	\STATE $\calC' \leftarrow \calC' \cup \{\calR_{y,  y'}\}$
        \ELSE
        	\STATE $t \leftarrow 0$
        	\FOR{all sub-parts $\calR$ of $\calR_{y,y'}$ (see \eqref{decomp-eq})}
        	\STATE $\calC' \leftarrow \calC' \cup \{\calR\}$
        	 \ENDFOR
        \ENDIF
      \ENDFOR
      \STATE $\calC \leftarrow \calC'$
    \ENDWHILE
    \RETURN $\calC$
  \end{algorithmic}
    \caption{Adaptive cover construction  \label{AlgAdapGrid}}
\end{algorithm}


The cover is constructed according to Algorithm \ref{AlgAdapGrid}. This algorithm is guaranteed to stop after at most $n'_{max} = \lceil \log_2\left(\frac{\|y_{max} - y_{min}\|}{\varepsilon}\right) \rceil$ iteration of the `while loop'. In the worst case, every hyper-rectangle of the current cover is decomposed into $2^d$ hyper-rectangles. Therefore, the worst-case complexity of the algorithm creates 
\[ 2^{d n'_{max}} = O(\varepsilon^{-d})
\]
hyper-rectangles. The worst-case complexity bound is similar to the complexity of the \Gdominate. However, depending on $f$, the number of hyper-rectangles generated by the algorithm can be much smaller than this worst-case complexity bound. The smoother the function $f$, the less hyper-rectangles are generated.



\section{Overestimating neural networks} \label{nn-sec}

\subsection{Monotonic Neural Networks}\label{sub_sec:RW_monotone}

 In this section, we remind known result on the approximation of non-decreasing functions with neural networks having non-negative weights and non-decreasing activation functions \cite{lang2005monotonic,daniels2010monotone,sill_1998}. 

\begin{prop}{\bf Sufficient condition to get a non-decreasing network}\label{nn-prop}

For any neural network such that:
\begin{itemize}
\item its activation functions are  non-decreasing
\item its weights $\wbf$ are non-negative
\end{itemize}
the function $f_{\wbf,\bbf}$ defined by the neural network  is non-decreasing.
\end{prop}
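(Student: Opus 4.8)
The plan is to exploit the layered, compositional structure of the network together with the elementary fact that a composition of coordinatewise non-decreasing maps is again non-decreasing. Writing the network as a composition $f_{\wbf,\bbf} = L_K \circ \cdots \circ L_1$ of its $K$ layers, where each layer acts as $L_\ell(z) = \sigma_\ell\left(W^{(\ell)} z + b^{(\ell)}\right)$ with $W^{(\ell)}$ the (non-negative) weight matrix extracted from $\wbf$, $b^{(\ell)}$ the bias extracted from $\bbf$, and $\sigma_\ell$ the coordinatewise application of the (non-decreasing) activation, it suffices to show that each individual layer $L_\ell$ preserves the partial order $\leq$ on the relevant Euclidean space, and then to conclude by composition.

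First I would treat the affine part of a single layer. Fix $\ell$ and suppose $z \leq z'$ coordinatewise. For each output coordinate $j$, the $j$-th component of $W^{(\ell)} z$ is $\sum_k W^{(\ell)}_{jk} z_k$; since every entry $W^{(\ell)}_{jk}$ is non-negative and $z_k \leq z'_k$, each summand can only increase, so $\left(W^{(\ell)} z\right)_j \leq \left(W^{(\ell)} z'\right)_j$. Adding the same bias $b^{(\ell)}$ to both sides preserves the inequality, hence $W^{(\ell)} z + b^{(\ell)} \leq W^{(\ell)} z' + b^{(\ell)}$. This is the single place where the non-negativity of the weights is used.

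Next I would handle the activation. Because $\sigma_\ell$ acts coordinatewise and each scalar activation is non-decreasing, $u \leq u'$ implies $\sigma_\ell(u)_j = \sigma_\ell(u_j) \leq \sigma_\ell(u'_j) = \sigma_\ell(u')_j$ for every $j$, so $\sigma_\ell(u) \leq \sigma_\ell(u')$. Combining the two steps, $z \leq z'$ yields $L_\ell(z) \leq L_\ell(z')$; that is, every layer is non-decreasing with respect to the componentwise order.

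Finally I would conclude by a one-line induction on the depth (equivalently, by the remark that composing non-decreasing maps yields a non-decreasing map): if $x \leq x'$, then applying $L_1, \ldots, L_K$ in turn preserves the order at each stage, so $f_{\wbf,\bbf}(x) \leq f_{\wbf,\bbf}(x')$. There is no genuine obstacle here, since the statement is elementary; the only point requiring care is bookkeeping the coordinatewise partial order correctly through each layer and invoking the non-negativity hypothesis precisely at the affine step rather than at the activation step.
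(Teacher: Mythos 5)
Your proof is correct. Note that the paper itself states this proposition without proof, presenting it as a known result from the monotonic-network literature (\cite{lang2005monotonic,daniels2010monotone,sill\_1998}); your layer-by-layer argument --- non-negative affine maps preserve the componentwise order, coordinatewise non-decreasing activations preserve it, and compositions of order-preserving maps are order-preserving --- is exactly the standard argument underlying those references, so there is nothing to flag beyond the (harmless) implicit assumption of a feed-forward layered architecture, which is the only kind of network the paper considers.
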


The conditions are sufficient but not necessary. We can think of simple  non-decreasing neural network with both positive and negative weights. However, as stated in the next theorem, neural networks with non-negative weights are universal approximators of non-decreasing functions.

\begin{thm}{\bf Universality of neural networks with non-negative weights \cite{daniels2010monotone}}\label{univ-thm}

Let $\calD\subset \RR^d$ be compact. For any continuous non-decreasing function $g:\calD \rightarrow \RR$ . For any $\eta>0$, there exist a feed-forward neural network with $d$ hidden layers, a non-decreasing activation function, non-negative weights $\wbf^*$ and bias $\bbf^*$ such that
\[ |g(x) - f_{\wbf^*,\bbf^*}(x)| \leq \eta \qquad \mbox{, for all } x\in\calD,
\]
where $f_{\wbf^*,\bbf^*}$ is the function defined by the neural network.
\end{thm}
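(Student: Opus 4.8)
The plan is to reduce the statement to the approximation of a conveniently structured monotone function and then to realize that structure with a sign-constrained network. First I would replace the continuous target $g$ by a monotone piecewise-linear function $p$ with $\sup_{x\in\calD}|g(x)-p(x)|\leq \eta/2$; this is routine, since $\calD$ is compact and monotone piecewise-linear functions are uniformly dense in the continuous non-decreasing functions (one interpolates $g$ on a fine subdivision while preserving the partial order). The gain is that $p$ admits a finite lattice representation $p(x)=\min_k \max_j \ell_{kj}(x)$, where each $\ell_{kj}(x)=a_{kj}^\top x + c_{kj}$ is affine with a non-negative linear part $a_{kj}\geq 0$; the non-negativity of $a_{kj}$ is exactly the translation of the monotonicity of $p$. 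It then suffices to approximate such a min--max of monotone affine maps, uniformly on $\calD$, within $\eta/2$ by a feed-forward network with non-negative weights $\wbf^*$ and a non-decreasing activation.

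The building blocks are easy to assemble under the sign constraint. An affine form $a^\top x + c$ with $a\geq 0$ is an admissible pre-activation, because the constraint bears on the weights $\wbf$ only while the bias $\bbf$ is free; composing it with the non-decreasing activation keeps the layer monotone, in agreement with Proposition~\ref{nn-prop}. A non-negative linear combination of such monotone quantities is again admissible and monotone, so sums and non-negative rescalings cost nothing. The maximum of several monotone signals can be approached with non-negative weights through a soft-max gadget: scaling the affine forms by a temperature $\lambda>0$ (which keeps the linear parts non-negative), exponentiating with a monotone activation, summing with unit weights, and composing with a logarithmic monotone activation recovers $\max_j \ell_{kj}$ in the limit $\lambda\to\infty$, every weight staying non-negative.

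The delicate point, and the main obstacle, is the outer minimum. The naive soft-min requires exponentiating a negated signal, hence a weight $-\lambda<0$, which violates the constraint; one therefore cannot symmetrically mimic the $\max$ construction. This asymmetry is the real content of the theorem: standard universal approximation in the spirit of Cybenko and Hornik places no restriction on the signs of the weights, whereas here every negation must be avoided. I would resolve it not by realizing the $\min$ as a pooling over independent network outputs, but by unfolding the lattice formula along the coordinates: one processes the target dimension by dimension, at each stage gluing the already-built monotone approximations on successive level sets of the next coordinate by means of monotone step-like nonlinearities produced by the activation. Each coordinate contributes one hidden layer, which is precisely why $d$ hidden layers appear in the statement, and the monotonicity of $g$ guarantees that the increments used in the gluing are non-negative, so all combination weights remain non-negative.

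Finally I would control the two error contributions, namely the $\eta/2$ of the piecewise-linear reduction and the $\eta/2$ of the soft realization of the lattice operations (governed by the temperatures of the soft-max gadgets and the sharpness of the step approximations), and choose the parameters so that their sum is at most $\eta$ uniformly on $\calD$, compactness making the bounds uniform. I expect the genuinely hard step to be the dimension-by-dimension gluing that replaces the forbidden soft-min, since it is there that the non-negativity of the weights and the exact depth $d$ are simultaneously forced; the remaining estimates are standard uniform-approximation bookkeeping.
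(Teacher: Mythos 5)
Your proposal cannot be checked against a proof in the paper, because the paper does not prove this theorem: it is imported from \cite{daniels2010monotone}, and the only information the paper adds is that the cited construction uses a Heaviside activation function and $d$ hidden layers. Measured against that cited proof, your attempt correctly isolates the central difficulty (a soft-min cannot be built without a negative weight, so the outer $\min$ of the lattice form is the obstruction) and correctly gestures at the known resolution (a dimension-by-dimension construction with step-like units, whose combination coefficients are non-negative because they are increments of a monotone function --- this is indeed how Daniels and Velikova proceed). But the gesture is the whole proof: the step you describe in one sentence as ``unfolding the lattice formula along the coordinates'' and ``gluing the already-built monotone approximations on successive level sets of the next coordinate'' is precisely the content of the theorem, and you defer it rather than carry it out. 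A proposal whose hardest step is an expectation is a plan, not a proof.

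The gap is concrete, and twofold. First, the phrase ``unfolding the lattice formula along the coordinates'' has no actual meaning: in $p(x)=\min_k\max_j \ell_{kj}(x)$ the outer $\min$ indexes pieces of the piecewise-linear approximant, not coordinates, so there is no coordinate-by-coordinate structure to unfold; and if you instead induct on the dimension of $g$ directly (the route of the cited proof), then your steps 1--3 (piecewise-linear reduction, lattice representation, soft-max gadget) play no role and should be deleted, which shows the plan is structurally incoherent as written. Second, the gluing itself is exactly where the sign constraint bites. Writing $x=(x',x_d)$, slice values $t_1<\cdots<t_K$, and inductively built approximations $g_j$ of $f(\cdot,t_j)$, the natural formula is
\begin{equation*}
f(x',x_d)\;\approx\; g_1(x')+\sum_{j\geq 1}\left(g_{j+1}(x')-g_j(x')\right)H(x_d-t_{j+1}),
\end{equation*}
and it is inadmissible twice over: the differences $g_{j+1}-g_j$ require subtracting network outputs (negative weights), and each term multiplies a hidden-layer output by a step unit, which is not a linear-combination-plus-activation operation. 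Repairing both under the constraint $\wbf\geq 0$ --- e.g.\ pushing products inside threshold units via identities such as $H(z_1)H(z_2)=H\left(H(z_1)+H(z_2)-\tfrac32\right)$, and organizing the slice approximations on a common grid so that every output coefficient is an increment of $f$ and hence non-negative --- is the substance of the Heaviside construction in \cite{daniels2010monotone}, and it is what simultaneously forces the choice of activation and the depth $d$. A minor further mismatch: your soft-max gadget uses $\exp$ and $\log$ as two distinct activations, while the statement asks for a single non-decreasing activation function; this point becomes moot once steps 1--3 are discarded, but it signals the same issue of not tracking the precise constraints the theorem imposes.
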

Notice that, in \cite{daniels2010monotone}, the neural network constructed in the proof of Theorem \ref{univ-thm}  involves a Heaviside activation function. The choice of the activation function is important. For instance, with a convex activation function (like ReLU), the function defined by the neural network with non-negative weights is convex \cite{amos2017input} and may approximate arbitrary poorly a well-chosen non-decreasing non-convex function.

Theorem \ref{univ-thm} guarantees that a well-chosen and optimized neural network with non-negative weights can approximate with any required accuracy the smallest non-decreasing majorant\footnote{Notice $f_\calC$ is not necessarily non-decreasing.} of $f_\calC$, as defined in \eqref{deffC}.

\subsection{Learning the neural network}\label{num-sec}

We consider a feed-forward neural network of depth $h$. The hidden layers are of width $l$. The weights are denoted by  $\wbf$ and we will restrict the search to non-negative weights: $\wbf\geq0$. The bias is denoted by $\bbf$. We consider, for a parameter $\theta>0$, the activation function
\[\sigma(t) =\tanh(\frac{t}{\theta}) \qquad\mbox{for all }t\in\RR.
\]

We consider an asymmetric  loss function in order to penalize more underestimation than overestimation
\[l_{\beta,\alpha^+, \alpha^-,p}(t)=\left\{\begin{array}{ll}
\alpha^+ (t-\beta)^2 & \mbox{ if } t\geq \beta \\
\alpha^- |t-\beta|^p & \mbox{ if } t< \beta \\
\end{array}\right.
\]
where the parameters $(\alpha^+, \alpha^-,\beta) \in \RR^3$ are non-negative and $p\in\NN$. Notice that asymmetric loss functions have already been used  to penalize either under-estimation or over-estimation \cite{yao1996asymmetric} \cite{Julian_2019}. 

Given \dominate~$(a_i,b_i)_{i=1..m}$, we define, for all $\wbf$ and $\bbf$
\[E(\wbf,\bbf) = \sum_{i=1}^m  l_{\beta,\alpha^+, \alpha,p}( f_{\wbf,\bbf}(a_i) - b_i ).
\]
The parameters of the network optimize
\begin{equation}\label{Pb}
\argmin_{\wbf\geq 0,\bbf}~~ E(\wbf,\bbf).
\end{equation}
The function $E$ is smooth but non-convex. In order to solve \eqref{Pb}, we apply a {\em projected stochastic gradient algorithm} \cite{bianchi2012convergence}. The projection on the constraint $\wbf\geq0$ is obtained by canceling its negative entries. As often with neural network, we cannot guarantee that the algorithm converges to a global minimizer.

\subsection{Guaranteeing $f_{\wbf^*,\bbf^*}(a_i)\geq b_i$}
The parameter $\beta\geq 0$ is an offset parameter. Increasing $\beta$ leads to poorer approximations. We show in the following proposition that $\beta$ can be arbitrarily small,  if the other parameters are properly chosen.

\begin{prop}{\bf Guaranteed overestimation of the samples}\label{overestimesample-prop}

Let $\beta > 0 $, $\alpha^->0$, $p>0$. If the neural network is sufficiently large and if $\theta$ is sufficiently small, then
\begin{equation}\label{tbonustpji}
f_{\wbf^*,\bbf^*}(a_i) \geq b_i\qquad\mbox{ for all }i=1..m,
\end{equation}
for any $\wbf^*$ and $\bbf^*$ solving \eqref{Pb}.
\end{prop}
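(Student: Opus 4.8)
The plan is to trap the optimal objective value between two bounds: underestimating even one sample makes $E$ larger than a fixed positive constant, while a large enough network with small enough $\theta$ makes $E$ strictly smaller than that same constant. Since every solution of \eqref{Pb} is a global minimizer, it cannot underestimate. First I would record that underestimation is expensive. If $f_{\wbf^*,\bbf^*}(a_{i_0}) < b_{i_0}$ for some $i_0$, then $t = f_{\wbf^*,\bbf^*}(a_{i_0}) - b_{i_0} < 0 < \beta$, so $|t-\beta| = \beta - t > \beta$. Keeping only the $i_0$ term of the sum (the others being non-negative) gives
\[ E(\wbf^*,\bbf^*) \ge \alpha^-\,|t-\beta|^p > \alpha^-\beta^p, \]
where $\alpha^->0$, $p>0$ and $\beta>0$ are used. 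Thus a single underestimated sample forces $E > \alpha^-\beta^p$.

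Next I would exhibit, for a suitable architecture and $\theta$, a feasible pair with objective below $\alpha^-\beta^p$. I would pick a continuous non-decreasing target $g^*:\calD\to\RR$ with $g^*(a_i) = b_i+\beta$ for all $i$; such a target exists whenever the shifted values respect the coordinatewise order of the $a_i$ (for the grid cover $y_i\le y_j$ forces $y'_i\le y'_j$, hence $b_i\le b_j$, and a monotone continuous interpolant through finitely many ordered points is then easy to build). By Theorem \ref{univ-thm}, for every $\eta>0$ there is a non-negative-weight network approximating $g^*$ within $\eta$ on $\calD$; taking $\theta$ small lets the activation $\tanh(\cdot/\theta)$ emulate the Heaviside activation underlying that construction, so the approximation is realizable inside the feasible set of \eqref{Pb}. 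Then $f_{\wbf,\bbf}(a_i)-b_i \in [\beta-\eta,\beta+\eta]$ for every $i$, each loss term is at most $\max(\alpha^+\eta^2,\alpha^-\eta^p)$, and $E(\wbf,\bbf) \le m\,\max(\alpha^+\eta^2,\alpha^-\eta^p)$. Choosing $\eta$ so small that this bound is below $\alpha^-\beta^p$ shows $\min E < \alpha^-\beta^p$; combined with the first step, any minimizer then satisfies $f_{\wbf^*,\bbf^*}(a_i)\ge b_i$ for all $i$.

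The hard part will be the construction step: producing the continuous non-decreasing interpolant $g^*$ (which requires the shifted targets to be compatible with the partial order, an automatic property for the grid cover but a point to check for adaptive covers), and, more delicately, turning the purely existential statement of Theorem \ref{univ-thm}—valid for some non-decreasing activation—into a quantitative statement for the fixed family $\tanh(\cdot/\theta)$, controlling jointly how large the network and how small $\theta$ must be so that the realized network stays feasible in \eqref{Pb} while its approximation error remains below $\eta$.
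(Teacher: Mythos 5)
Your proof is correct and follows essentially the same route as the paper's: bound the minimal loss above by $\alpha^-\beta^p$ via a universality-based competitor whose values at the $a_i$ are within $\eta$ of $b_i+\beta$, and note that any underestimated sample forces the loss strictly above $\alpha^-\beta^p$, a contradiction. You are in fact more explicit than the paper, which applies Theorem \ref{univ-thm} directly to get the competitor network without constructing the continuous non-decreasing interpolant $g^*$ (whose existence needs the order-compatibility of the $(a_i,b_i)$ that you rightly flag) and without discussing the realization of the approximation with the fixed activation family $\tanh(\cdot/\theta)$ --- the two delicate points your last paragraph correctly identifies.
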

\begin{proof}
Since $\alpha^-\beta^p>0$, there exists $\eta>0$ such that 
\[m \max(\alpha^-\eta^p,\alpha^+ \eta^2) \leq\alpha^-\beta^p. 
\]
Given Theorem \ref{univ-thm}, when the network is sufficiently large and $\theta$ is sufficiently small there exist  a bias $\bbf'$ and non-negative weights $\wbf'$ such that for all $i=1..m$:
\[|f_{\wbf',\bbf'}(a_i) - (b_i+\beta)| \leq \eta.
\]
Therefore,
\begin{eqnarray}
E(\wbf^*,\bbf^*) & \leq & E(\wbf',\bbf') \nonumber\\
& \leq & \sum_{i=1}^m  \max(\alpha^-\eta^p,\alpha^+ \eta^2) \nonumber \\
& \leq &  \alpha^- \beta^p. \label{iiettgourt}
\end{eqnarray}
If by contradiction we assume that there exists $i_0$ such that
\[f_{\wbf^*,\bbf^*}(a_{i_0}) < b_{i_0}
\]
then we must have
\[E(\wbf^*,\bbf^*) \geq  l_{\beta,\alpha^+, \alpha^-,p}( f_{\wbf^*,\bbf^*}(a_{i_0}) - b_{i_0} ) > \alpha^- \beta^p.
\]
This contradicts \eqref{iiettgourt} and we conclude that \eqref{tbonustpji} holds.
\end{proof}

The proposition guarantees  that for a large network, with $\theta$ small, we are sure to overestimate the target. Because Theorem \ref{univ-thm} does not provide a configuration (depth, width, activation function) that permits to approximate any function with an accuracy $\eta$, it is not possible to provide such a configuration for the parameters in Proposition \ref{overestimesample-prop}. However, given a configuration and given weights $\wbf^*$ and bias $\bbf^*$ returned by an algorithm, it is possible to test if \eqref{tbonustpji} holds. If is does not hold, it is always possible to increase the width, depth, etc and redo the optimization. Theorem \ref{univ-thm} and Proposition \ref{overestimesample-prop}, combined with properties of the landscape of large networks such as  \cite{nguyen2017loss}, guarantee that such a strategy stops after a finite number of optimization procedure.

\section{Experiments}\label{expe-sec}

In this section, we compare the results of several learning strategies on two synthetic experiments with $d=1$ and $2$ and on a real dataset from avionic industry. The synthetic experiments permit to illustrate the method; the latter real dataset show that the method permits to construct a surrogate of a critical function that can be embedded while the true critical function cannot.

The python codes that have been used to generate the experiments, as well as additional experiments, are provided with the submission and will be made available on an open source deposit.

\subsection{Methods to be compared}

 The architecture of the network is the same for all experiments and contains $4$ fully connected layers with $64$ neurons in each layer. The memory requirement to store the network is $64\times d+4\times 64^2 + 5*64=64d+16705$ floating numbers. The size of the input layer is $d$. The size of the output layer is $1$. We compare:
\begin{itemize}
\item The {\bf $\delta$-baseline}: For a parameter $\delta\geq 0$, it is a simple neural network, with an $\ell^2$ loss. It is trained:
\begin{itemize}
\item on the points $(a_i,f(a_i)+\delta)_{i=1..m}$, when $f$ can be computed;
\item on the modified dataset $(x_i,f(x_i)+\delta)_{i=1..n}$, when $f$ cannot be computed.
\end{itemize}
The $\delta$-\baseline~is in general not guaranteed to provide an overestimation. The $0$-\baseline~is expected to provide a better approximation of the true function $f$ than the other methods but it fails to always overestimating it.

If $\delta$ is such that $f(a_i)+\delta \geq b_i$, for all $i=1..m$, the $\delta$-\baseline~is guaranteed to provide an overestimation.
\item The {\bf Overestimating Neural Network (ONN)}: Is a neural network whose parameters solve \eqref{Pb} for the parameters $\beta, \alpha^+, \alpha$ and $p$ coarsely tuned for each experiment, and depending on the context, the \Gdominate~(ONN with GMP), the \Fdominate~(ONN with FMP), the \Gdominate~(ONN with DMP). 

We always take $\theta=1$. The size of the network and the values of s $\beta, \alpha^+, \alpha$ and $p$ always permit to have $f_{\wbf^*,\bbf^*}(a_i)\geq b_i$, for all $i=1..m$. Therefore, as demonstrated in the previous sections, $f_{\wbf^*,\bbf^*}$ is guaranteed to overestimate $f$.

\subsection{Evaluation metrics}

We are defining in this section the metrics that are used to compare the methods. Some metrics use a test dataset $(x'_i,f(x'_i))_{i=1..n'}$. 

For the $1D$ synthetic example, we take $n'=100000$ and for the industrial example, we take $n'=75 000$. In both cases the $x'_i$ are iid according to the uniform distribution in $\calD$. We consider the {\bf Majoring Approximation Error (MAE)} defined by
\[MAE = \frac{1}{m} \sum_{i=1}^m (b_i - f(a_i));
\]
the {\bf Root Mean Square Error (RMSE)} defined by
\[\left( \frac{1}{n'} \sum_{i=1}^{n'} ( f_{\wbf^*,\bbf^*}(x'_i)- f(x'_i))^2 \right)^{\frac{1}{2}};
\]
the {\bf Overestimation proportion (OP)} defined by
\[ \frac{100}{n'} \left|\left\{ i |  f_{\wbf^*,\bbf^*}(x'_i)\geq f(x'_i) \right\}\right|;
\]
and remind if the method  guarantees $f_{\wbf^*,\bbf^*}$ overestimates $f$ {\bf Formal Guarantee (FG)}.

For the experiments on the $1D$ synthetic example the methods are also evaluated using visual inspection.

\end{itemize}

\subsection{$1D$ synthetic experiment}

The \toyone~ aims at overestimating the function $f_1$ defined over $[-10,10]$ by

\begin{equation}\label{fonc1D}
f_1(x)=\left\{\begin{array}{ll}
 3x+3\sin(x)-4 &  \mbox{if } x \in  [-10;-1) \\
-sign(x).x^2+\sin(x) & \mbox{if } x \in [-1;1] \\
 x+\cos(x)+10 &  \mbox{if } x \in (1;10]
\end{array}\right.
\end{equation}
\begin{figure}
  \centering
      \includegraphics[width=.40\textwidth]{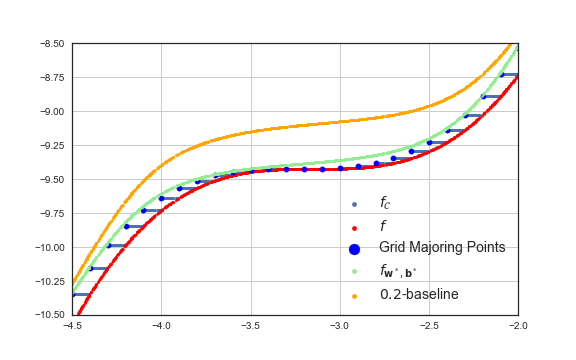}
      \caption{\toyone~ with \Gdominate \label{fig-toy1D-GDP01}}
\end{figure}

The function $f_1$, $f_\calC$, $f_{\wbf^*,\bbf^*}$ for \Gdominate~and the $0.2$-\baseline~ are displayed on Figure \ref{fig-toy1D-GDP01}, on the interval $[-4.5, -2]$. The  function $f_1f$, the \Gdominate~ and  $f_{\wbf^*,\bbf^*}$ for \Gdominate~and the $0.2$-\baseline~are displayed on Figure \ref{fig-toy1D-GDP02}, on the interval $[0.8, 1.2]$. The  function $f_1$, the \Ddominate, the sample $(x_i,f_1(x_i))_{i=1..n}$ and  $f_{\wbf^*,\bbf^*}$ for \Ddominate~ are displayed on Figure \ref{fig-toy1D-DMP01}, on the interval $[-5.5, 0]$. 
\begin{figure}
  \centering
      \includegraphics[width=.40\textwidth]{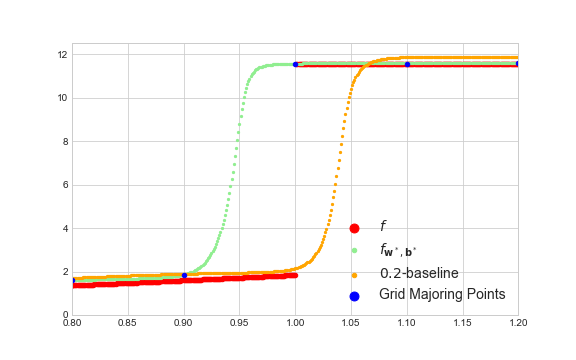}
      \caption{Discontinuity of \toyone~ with \Gdominate\label{fig-toy1D-GDP02}}
\end{figure}

\begin{figure}
  \centering
      \includegraphics[width=.40\textwidth]{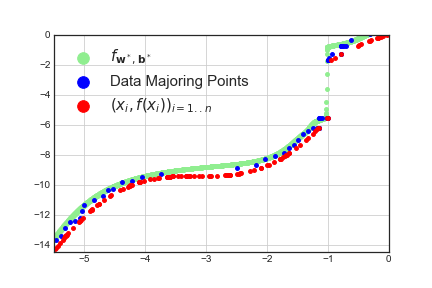}
      \caption{ \toyone~ with \Ddominate \label{fig-toy1D-DMP01}}
\end{figure}

We clearly see that the adaptive \dominate~aggregate in dyadic manner in the vicinity of the discontinuities. We also see on Figure \ref{fig-toy1D-GDP02} how \dominate~permit to anticipate the discontinuity and construct an overestimation. The density of \dominate~depends on the slope of $f_1$. This permits to reduce the approximation error. Also,  $f_{\wbf^*,\bbf^*}$ passes in the vicinity of the \dominate~ and provides a good approximation that overestimates $f$. 

The MAE, RMSE, OP and FG are provided in Table \ref{tab1D} for the $0$-\baseline, the  $0.5$-\baseline, $f_\calC$ and the ONN for three types of \dominate. We see that the RMSE worsen as we impose guarantees and approach the realistic scenario where the surrogate is easy to compute and does not require too much memory consumption.

\begin{table}
  \center\small
    \begin{tabular}{|l||c|c|c|c|c|}
      \hline
      & $n$ & \textbf{MAE} & \textbf{RMSE} & \textbf{OP} & \textbf{FG}\\
      \hline
      \hline
      $0$-\baseline & 200 & N/A& -.04 & 51.9 \% & NO \\
      $0.5$-\baseline& 200 & 0.5 & 0.59& 99.5\% & NO \\
      $f_\calC$ & 200& N/A& 0.10 & 100 \% &YES  \\
      ONN with GMP & 200 & 0.24& 0.23& 100\% & YES \\
      ONN with FMP & 200 & 0.23& 0.55& 100\% & YES \\
      ONN with DMP & 500 & 0.82 & 0.60& 100\% & YES \\
      \hline

    \end{tabular}
    \caption{Metrics - \toyone~\label{tab1D}}
\end{table}

\subsection{$2D$ synthetic experiment}
The same phenomenon described on \toyone~occur in dimension $2$. We only illustrate here the difference between the \Gdominate, \Fdominate~and \Ddominate~ for the function
\[ \forall (x,y) \in [0;15]^2\qquad g(x,y) = f_1\left(\sqrt{x^2 + y^2} - 10\right)
\]
where $f_1$ is the function defined in \eqref{fonc1D}. 

We display, on Figure \ref{staticAdaptGrid}, the \Fdominate~returned by the \DA. The density of points is correlated with the amplitude of the gradient of the function. Algorithm \ref{AlgAdapGrid} permit to diminish the number of \dominate. For instance, for the $2D$ synthetic example for $\varepsilon =0.1$, the \Gdominate~counts $22500$ points. The \Fdominate~counts $14898$ points, for $\varepsilon= 0.5$, and $3315$, for  $\varepsilon= 2$. The \Ddominate~counts $ 8734$ points, for $\varepsilon= 0.5$, and $ 1864$, for  $\varepsilon= 2$.

\begin{figure}
  \centering
  \begin{subfigure}[t]{0.23\textwidth}
    \includegraphics[width=\textwidth]{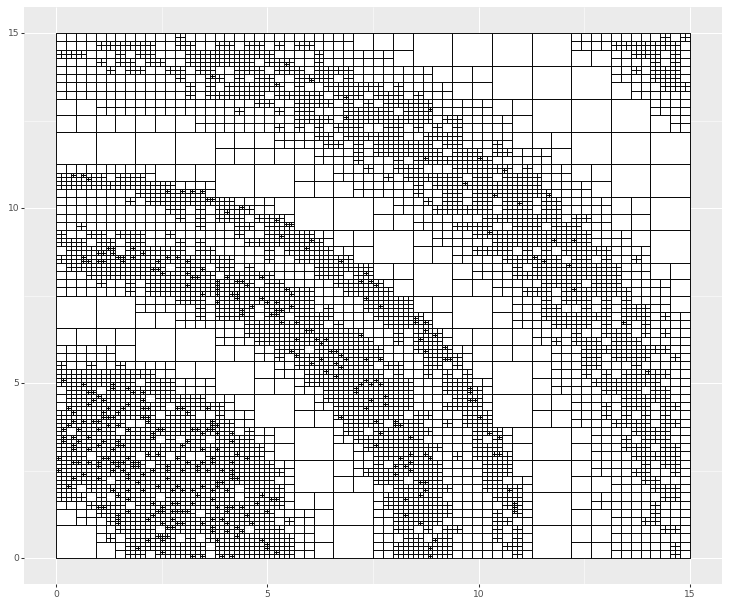}
    \caption{  Complete domain}
  \end{subfigure}
  \begin{subfigure}[t]{0.23\textwidth}
    \includegraphics[width=\textwidth]{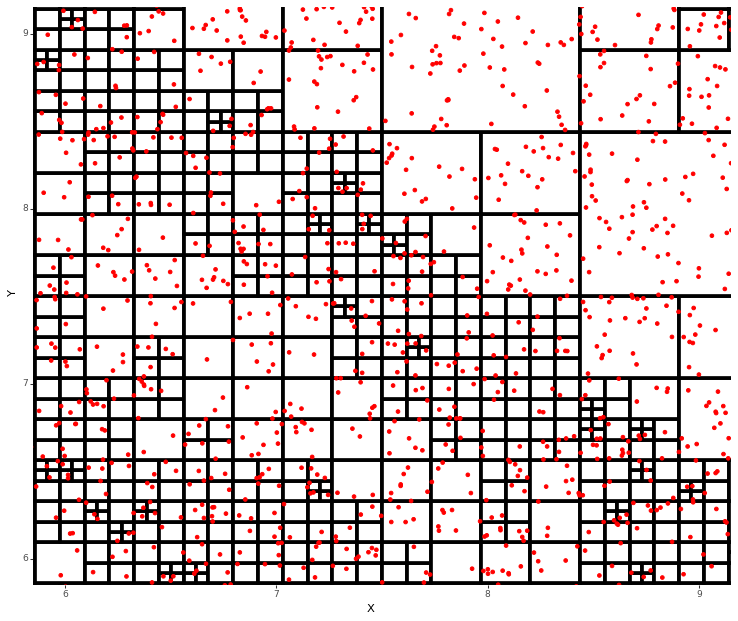}
    \caption{Zoom on the discontinuity}
  \end{subfigure}
  \caption{  \label{DataAdaptGrid} Data \dominate~grid}
\end{figure}

On Figure \ref{DataAdaptGrid}, we represent the dataset and the cover obtained using Algorithm \ref{AlgAdapGrid} for the synthetic $2D$ example. The inputs $a_i$ of the corresponding \dominate~are displayed on Figure \ref{staticAdaptGrid}.

\begin{figure}
  \centering
  \begin{subfigure}[t]{0.23\textwidth}
    \includegraphics[width=\textwidth]{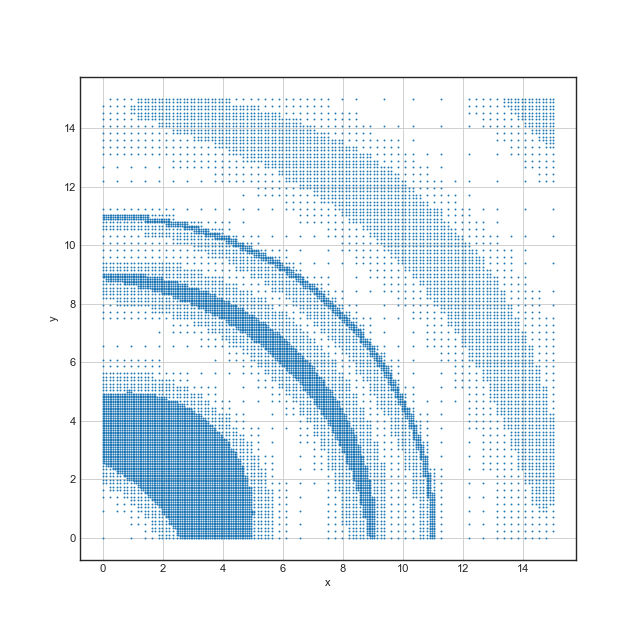}
    \caption{  Complete domain}
  \end{subfigure}
  \begin{subfigure}[t]{0.23\textwidth}
    \includegraphics[width=\textwidth]{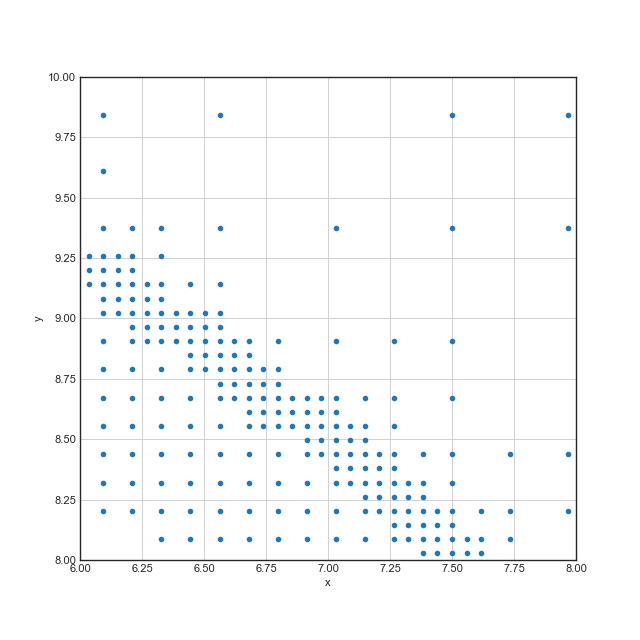}
    \caption{Zoom on the discontinuity}
  \end{subfigure}
  \caption{  \label{staticAdaptGrid} \Fdominate~on synthetic $2D$ $f$.}
\end{figure}

\subsection{Industrial application}

\begin{table}[hb]
  \center\scriptsize
  \begin{tabular}{|c||c|c|c|c|c|c|}
    \hline
     Method  & $n$ & $n_{maj}$ & \textbf{RMSE} &\textbf{MAE}& \textbf{OP}& \textbf{FG}\\
    \hline\hline
      $0$-\baseline & 150k& 150k& 3.3& N/A&65.1\%& NO\\
      $300$-\baseline & 150k& 150k& 302.6& 300.0 &100\%& NO\\
      ONN with GMP  & 150k& 150k& 309.3& 262.7 &100\%& YES \\
      ONN with DMP & 150k0& 110k & 445.7 & N/A& N/A & YES \\

      \hline
  \end{tabular}
  \caption{\label{tabIndus}Results on the industrial dataset}
\end{table}
The method developed in this paper provides formal guarantees of overestimation that are safety guarantee directly applicable in critical embedded systems.

The construction of surrogate functions is an important subject in industry \cite{lathuiliere2019comprehensive,biannic2016surrogate,jian2017learning,sudakov2019artificial}. In this work, we are considering an industrial and heavy simulation code that has six inputs $d=6$ and one output and that represents a complex physic phenomenon of an aircraft. The output is an non-decreasing function. During the flight, given flight conditions $x$ the output $f(x)$ is compared to a threshold and the result of the test launch an action. When we replace $f$ by the overestimating surrogate $f_{\wbf^*,\bbf^*}$, the airplane launches the action less often. However, the airplane only launches the action when the action is guaranteed to be safe.

The industrial dataset contains $n=150000$ examples on a static grid and another set of $150000$ sampled according to the uniform distribution on the whole domain. For each inputs, the reference computation code is used to generate the associated true output.

We compare $0$-\baseline,$300$-\baseline~with the ONN learned on \Gdominate~and \Ddominate~methods. All the methods are learned on the static grid except OON with \Ddominate. The table \ref{tabIndus} presents the metrics for the different methods. The results are acceptable for the application and memory requirement to store and embed the neural network is $17088$ floating numbers. It is one order of magnitude smaller than the size of the dataset.

\section{Conclusion - Future Work}

We presented a method that enables to formally guarantee that a prediction of a monotonic neural network will always be in an area that preserves the safety of a system. This is achieved by the construction of the network, the utilization of majoring points and the learning phase, which allows us to free ourselves from a massive testing phase that is long and costly while providing fewer guarantees.

Our work have limitations on functions that can be a safely approximate, but this is a first step toward a safe use of neural networks in critical applications. Nevertheless, this can already be used in simple safety critical systems that verify our hypotheses. Future works will look on possibility to leverage the utilization of the monotonic hypothesis. Another direction of improvement is to build smarter algorithms that require less majoring points thanks to a better adaptation to the structure of the function. In particular, this should permit to apply the method to functions whose is input space is of larger dimension, when they have the proper structure.

\section*{Acknowledgements}
This project received funding from the French "Investing for the Future – PIA3" program within the Artificial and Natural Intelligence Toulouse Institute (ANITI) under the grant agreement ANR-19-PI3A-0004. The authors gratefully acknowledge the support of the DEEL project.\footnote{\url{https://www.deel.ai/}
}

\bibliography{safelearning.bib}

\begin{thebibliography}{25}
\providecommand{\natexlab}[1]{#1}
\providecommand{\url}[1]{\texttt{#1}}
\providecommand{\urlprefix}{URL }
\expandafter\ifx\csname urlstyle\endcsname\relax
  \providecommand{\doi}[1]{doi:\discretionary{}{}{}#1}\else
  \providecommand{\doi}{doi:\discretionary{}{}{}\begingroup
  \urlstyle{rm}\Url}\fi

\bibitem[{Amos, Xu, and Kolter(2017)}]{amos2017input}
Amos, B.; Xu, L.; and Kolter, J.~Z. 2017.
\newblock Input Convex Neural Networks.
\newblock In \emph{Proceedings of the 34th International Conference on Machine
  Learning - Volume 70}, ICML’17, 146–155. JMLR.org.

\bibitem[{Anthony and Bartlett(1999)}]{ab-nnltf-99}
Anthony, M.; and Bartlett, P.~L. 1999.
\newblock \emph{Neural Network Learning: Theoretical Foundations}.
\newblock Cambridge University Press.
\newblock ISBN 0 521 57353 X.
\newblock
  \urlprefix\url{http://www.stat.berkeley.edu/~bartlett/nnl/index.html}.
\newblock 404pp. ISBN 978-0-521-57353-X. Reprinted 2001, 2002. Paperback
  edition 2009; ISBN 978-0-521-11862-0.

\bibitem[{Bartlett et~al.(2019)Bartlett, Harvey, Liaw, and
  Mehrabian}]{bartlett2019nearly}
Bartlett, P.~L.; Harvey, N.; Liaw, C.; and Mehrabian, A. 2019.
\newblock Nearly-tight VC-dimension and Pseudodimension Bounds for Piecewise
  Linear Neural Networks.
\newblock \emph{Journal of Machine Learning Research} 20(63): 1--17.

\bibitem[{Bianchi and Jakubowicz(2012)}]{bianchi2012convergence}
Bianchi, P.; and Jakubowicz, J. 2012.
\newblock Convergence of a multi-agent projected stochastic gradient algorithm
  for non-convex optimization.
\newblock \emph{IEEE transactions on automatic control} 58(2): 391--405.

\bibitem[{Biannic et~al.(2016)Biannic, Hardier, Roos, Seren, and
  Verdier}]{biannic2016surrogate}
Biannic, J.; Hardier, G.; Roos, C.; Seren, C.; and Verdier, L. 2016.
\newblock Surrogate models for aircraft flight control: some off-line and
  embedded applications.
\newblock \emph{AerospaceLab Journal} (12): pages--1.

\bibitem[{Boopathy et~al.(2019)Boopathy, Weng, Chen, Liu, and
  Daniel}]{boopathy2019cnn}
Boopathy, A.; Weng, T.-W.; Chen, P.-Y.; Liu, S.; and Daniel, L. 2019.
\newblock Cnn-cert: An efficient framework for certifying robustness of
  convolutional neural networks.
\newblock In \emph{Proceedings of the AAAI Conference on Artificial
  Intelligence}, volume~33, 3240--3247.

\bibitem[{Daniels and Velikova(2010)}]{daniels2010monotone}
Daniels, H.; and Velikova, M. 2010.
\newblock Monotone and partially monotone neural networks.
\newblock \emph{IEEE Transactions on Neural Networks} 21(6): 906--917.

\bibitem[{Fischer et~al.(2019)Fischer, Balunovic, Drachsler-Cohen, Gehr, Zhang,
  and Vechev}]{fischer2019}
Fischer, M.; Balunovic, M.; Drachsler-Cohen, D.; Gehr, T.; Zhang, C.; and
  Vechev, M. 2019.
\newblock DL2:- Training and Querying Neural Networks with Logic.
\newblock In \emph{International Conference on Machine Learning}.

\bibitem[{Gal and Ghahramani(2016)}]{Gal2016Bayesian}
Gal, Y.; and Ghahramani, Z. 2016.
\newblock Bayesian Convolutional Neural Networks with {B}ernoulli Approximate
  Variational Inference.
\newblock In \emph{4th International Conference on Learning Representations
  (ICLR) workshop track}.

\bibitem[{Harvey, Liaw, and Mehrabian(2017)}]{harvey2017nearly}
Harvey, N.; Liaw, C.; and Mehrabian, A. 2017.
\newblock Nearly-tight VC-dimension bounds for piecewise linear neural
  networks.
\newblock In \emph{Conference on Learning Theory}, 1064--1068.

\bibitem[{Jakubovitz, Giryes, and
  Rodrigues(2019)}]{jakubovitz2019generalization}
Jakubovitz, D.; Giryes, R.; and Rodrigues, M.~R. 2019.
\newblock Generalization error in deep learning.
\newblock In \emph{Compressed Sensing and Its Applications}, 153--193.
  Springer.

\bibitem[{Jian et~al.(2017)Jian, Chang, Hsu, and Wang}]{jian2017learning}
Jian, Z.-D.; Chang, H.-J.; Hsu, T.-s.; and Wang, D.-W. 2017.
\newblock Learning from Simulated World-Surrogates Construction with Deep
  Neural Network.
\newblock In \emph{SIMULTECH}, 83--92.

\bibitem[{Julian, Kochenderfer, and Owen(2019)}]{Julian_2019}
Julian, K.~D.; Kochenderfer, M.~J.; and Owen, M.~P. 2019.
\newblock Deep Neural Network Compression for Aircraft Collision Avoidance
  Systems.
\newblock \emph{Journal of Guidance, Control, and Dynamics} 42(3): 598–608.
\newblock ISSN 1533-3884.
\newblock \doi{10.2514/1.g003724}.
\newblock \urlprefix\url{http://dx.doi.org/10.2514/1.G003724}.

\bibitem[{Katz et~al.(2017)Katz, Barrett, Dill, Julian, and
  Kochenderfer}]{katz2017reluplex}
Katz, G.; Barrett, C.; Dill, D.~L.; Julian, K.; and Kochenderfer, M.~J. 2017.
\newblock Reluplex: An efficient SMT solver for verifying deep neural networks.
\newblock In \emph{International Conference on Computer Aided Verification},
  97--117. Springer.

\bibitem[{Keren, Cummins, and Schuller(2018)}]{keren2018calibrated}
Keren, G.; Cummins, N.; and Schuller, B. 2018.
\newblock Calibrated prediction intervals for neural network regressors.
\newblock \emph{IEEE Access} 6: 54033--54041.

\bibitem[{Lang(2005)}]{lang2005monotonic}
Lang, B. 2005.
\newblock Monotonic Multi-layer Perceptron Networks as Universal Approximators.
\newblock In Duch, W.; Kacprzyk, J.; Oja, E.; and Zadro{\.{z}}ny, S., eds.,
  \emph{Artificial Neural Networks: Formal Models and Their Applications --
  ICANN 2005}, 31--37. Berlin, Heidelberg: Springer Berlin Heidelberg.
\newblock ISBN 978-3-540-28756-8.

\bibitem[{Lathuili{\`e}re et~al.(2019)Lathuili{\`e}re, Mesejo, Alameda-Pineda,
  and Horaud}]{lathuiliere2019comprehensive}
Lathuili{\`e}re, S.; Mesejo, P.; Alameda-Pineda, X.; and Horaud, R. 2019.
\newblock A comprehensive analysis of deep regression.
\newblock \emph{IEEE transactions on pattern analysis and machine intelligence}
  .

\bibitem[{Mirman, Gehr, and Vechev(2018)}]{pmlr-v80-mirman18b}
Mirman, M.; Gehr, T.; and Vechev, M. 2018.
\newblock Differentiable Abstract Interpretation for Provably Robust Neural
  Networks.
\newblock In Dy, J.; and Krause, A., eds., \emph{Proceedings of the 35th
  International Conference on Machine Learning}, volume~80 of \emph{Proceedings
  of Machine Learning Research}, 3578--3586. Stockholmsmässan, Stockholm
  Sweden: PMLR.
\newblock \urlprefix\url{http://proceedings.mlr.press/v80/mirman18b.html}.

\bibitem[{Nguyen and Hein(2017)}]{nguyen2017loss}
Nguyen, Q.; and Hein, M. 2017.
\newblock The loss surface of deep and wide neural networks.
\newblock In \emph{Proceedings of the 34th International Conference on Machine
  Learning-Volume 70}, 2603--2612. JMLR. org.

\bibitem[{Pearce et~al.(2018)Pearce, Brintrup, Zaki, and
  Neely}]{pearce2018high}
Pearce, T.; Brintrup, A.; Zaki, M.; and Neely, A. 2018.
\newblock High-Quality Prediction Intervals for Deep Learning: {A}
  Distribution-Free, Ensembled Approach.
\newblock In \emph{Proceedings of the 35th International Conference on Machine
  Learning, {ICML}}, 4072--4081.

\bibitem[{Raghunathan, Steinhardt, and Liang(2018)}]{raghunathan2018certified}
Raghunathan, A.; Steinhardt, J.; and Liang, P. 2018.
\newblock Certified Defenses against Adversarial Examples.

\bibitem[{Sill(1998)}]{sill_1998}
Sill, J. 1998.
\newblock Monotonic Networks.
\newblock In Jordan, M.~I.; Kearns, M.~J.; and Solla, S.~A., eds.,
  \emph{Advances in Neural Information Processing Systems 10}, 661--667. MIT
  Press.
\newblock
  \urlprefix\url{http://papers.nips.cc/paper/1358-monotonic-networks.pdf}.

\bibitem[{Sudakov et~al.(2019)Sudakov, Koroteev, Belozerov, and
  Burnaev}]{sudakov2019artificial}
Sudakov, O.; Koroteev, D.; Belozerov, B.; and Burnaev, E. 2019.
\newblock Artificial Neural Network Surrogate Modeling of Oil Reservoir: a Case
  Study.
\newblock In \emph{International Symposium on Neural Networks}, 232--241.
  Springer.

\bibitem[{Tagasovska and Lopez-Paz(2019)}]{tagasovska2019}
Tagasovska, N.; and Lopez-Paz, D. 2019.
\newblock Single-Model Uncertainties for Deep Learning.
\newblock \emph{arXiv preprint arXiv:1811.00908, To appear in NeurIPS} .

\bibitem[{Yao and Tong(1996)}]{yao1996asymmetric}
Yao, Q.; and Tong, H. 1996.
\newblock Asymmetric least squares regression estimation: a nonparametric
  approach.
\newblock \emph{Journal of nonparametric statistics} 6(2-3): 273--292.

\end{thebibliography}

\end{document}